\theoremstyle{plain}
\newtheorem{theorem}{Theorem}[section]
\newtheorem{corollary}[theorem]{Corollary}
\theoremstyle{definition}
\theoremstyle{remark}
\icmltitlerunning{Control and Realism: Best of Both Worlds in Layout-to-Image without Training}
\begin{document}

\twocolumn[
\icmltitle{Control and Realism: Best of Both Worlds in Layout-to-Image without Training}



\icmlsetsymbol{equal}{*}

\begin{icmlauthorlist}
\icmlauthor{Bonan Li}{equal,ucas,nus}
\icmlauthor{Yinhan Hu}{equal,ucas}
\icmlauthor{Songhua Liu}{nus}
\icmlauthor{Xinchao Wang}{nus}
\end{icmlauthorlist}

\icmlaffiliation{ucas}{University of Chinese Academy of Sciences, Beijing, China}
\icmlaffiliation{nus}{National University of Singapore, Singapore}

\icmlcorrespondingauthor{Xinchao Wang}{xinchao@nus.edu.sg}

\icmlkeywords{Machine Learning, ICML}

\vskip 0.3in
]



\printAffiliationsAndNotice{\icmlEqualContribution} 

\begin{abstract}

Layout-to-Image generation aims to create complex scenes with precise control over the placement and arrangement of subjects. Existing works have demonstrated that pre-trained Text-to-Image diffusion models can achieve this goal without training on any specific data; however, they often face challenges with imprecise localization and unrealistic artifacts. Focusing on these drawbacks, we propose a novel training-free method, \textit{\textbf{WinWinLay}}. At its core, WinWinLay presents two key strategies—Non-local Attention Energy Function and Adaptive Update—that collaboratively enhance control precision and realism. On one hand, we theoretically demonstrate that the commonly used attention energy function introduces inherent spatial distribution biases, hindering objects from being uniformly aligned with layout instructions. To overcome this issue, non-local attention prior is explored to redistribute attention scores, facilitating objects to better conform to the specified spatial conditions. On the other hand, we identify that the vanilla backpropagation update rule can cause deviations from the pre-trained domain, leading to out-of-distribution artifacts. We accordingly introduce a Langevin dynamics-based adaptive update scheme as a remedy that promotes in-domain updating while respecting layout constraints. Extensive experiments demonstrate that WinWinLay excels in controlling element placement and achieving photorealistic visual fidelity, outperforming the current state-of-the-art methods. 
\end{abstract}

\begin{figure*}[h]
\begin{center}   \includegraphics[width=1\linewidth]{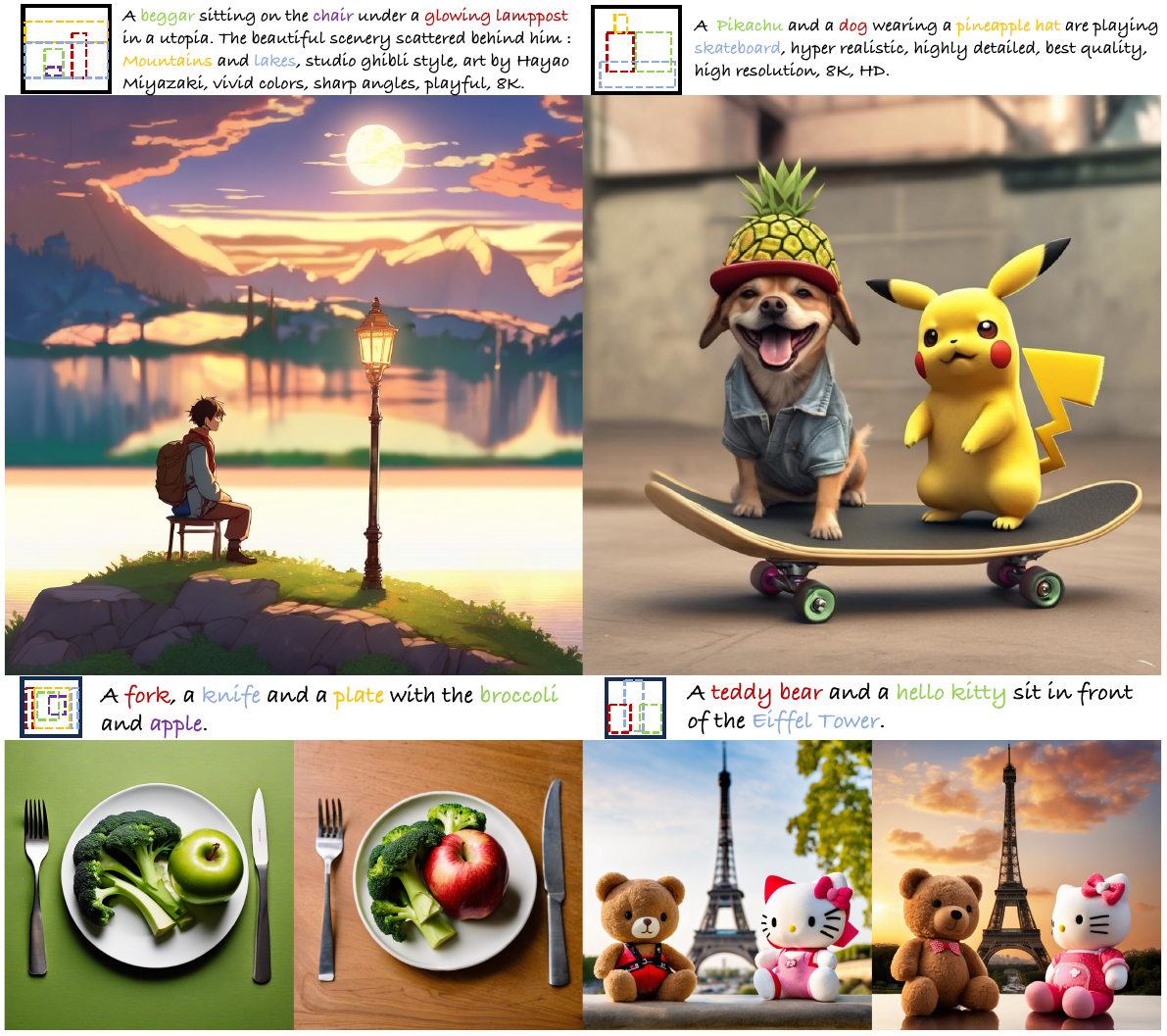}
        \caption{Given user-provided bounding boxes and prompts of subjects, our \textit{\textbf{WinWinLay}} generates controllable and realistic images with pre-trained diffusion model, such as SDXL~\cite{sdxl}, without any finetuning with paired data. }
        \label{fig:teaser}
    \end{center}
    \vspace{-0.8cm}
\end{figure*}

\section{Introduction}
Recent advances in Text-to-Image (T2I) generation~\cite{ldm,sdxl} have profoundly revolutionized the vision landscape, facilitating the synthesis of highly authentic assets from textual prompts, \textit{e.g.}, text-driven Image-to-Image translation~\cite{ppii,zii,dreambooth,instantbooth,monocular} and video generation~\cite{tav, ei, videobooth, hierarchical, harivo}. Nevertheless, designing comprehensive prompts to meticulously control every aspect of an image can be both labor-intensive and time-consuming, posing challenges for efficient generation workflows. As a remedy, Layout-to-Image (L2I) models~\cite{freestyle,layoutdiff,boundary,csg} have been developed, guiding the process in a desired direction by incorporating user-provided inputs, such as bounding boxes.

To acquire L2I models, an intuitive framework is to fine-tune powerful T2I models with with spatial conditioning. However, these approaches~\cite{gligen,ifadapter} incurs expensive training cost and faces challenges when collecting resource-intensive paired data. To overcome the aforementioned limitations, existing methods~\cite{zero,boxdiff} have explored the incorporation of layout guidance during the sampling phase, establishing the training-free paradigm for L2I. Among them, backward guidance~\cite{training}, which combines attention redistribution and backpropagation update rules, has been demonstrated as a promising scheme~\cite{csg,boundary}. Specifically, attention redistribution encourages cross-attention activation at designated positions via the energy function, while the backpropagation rule directly updates feature maps using corresponding gradients to match the specified layout. However, while these works offer higher efficiency, they fall short of training-based approaches in terms of spatial fidelity and image quality.

To bridge this gap, we initiate our exploration with an in-depth analysis of existing mechanisms and demonstrate that, although effective, they are still prone to suboptimal results in terms of control capabilities and visual realism. Firstly, we theoretically verify that the \textit{\textbf{attention energy function tends to favor patches with higher initial attention}} within the bounding box during the optimization process, leading to the final generated objects being confined to smaller regions rather than evenly distributed across the specified box. Secondly, the \textit{\textbf{backpropagation update rule fails to account for the pre-trained distribution when biasing features}}, resulting in a trade-off between control and image quality, where stronger control often comes at the cost of poorer visual appearance. To the best of our knowledge, this is the first study that theoretically analyze these two core components of backward guidance in Layout-to-Image task. 

In this paper, we propose a novel model named WinWinLay, which generates high-quality training-free L2I results by explicitly accounting for above challenges. On one hand, to better align the given spatial guidance, we augment the attention energy function with the non-local attention prior to encourage attention to uniformly cover the specified region. Additionally, to avoid constraining irregularly shaped objects into a rigid box-like form (\textit{e.g.}, coconut trees usually have broad leaves and slender trunks), we introduce a decaying schedule that gradually decreases the strength of prior along the denoising step that facilitates natural structure. On the other hand, focusing on the trade-off between controllability and realism, we design the update rule based on Langevin dynamics, which brings the best of two worlds by simultaneously incorporating the updating directions given by both layout controls and the original T2I model. Specifically, we introduce an adaptive weighting strategy to balance the two directions across different sampling steps, eliminating the need for cumbersome hyperparameter search.. Experiments demonstrate that the proposed approach mitigates the above issues successfully and generates satisfactory L2I results (see Figure~\ref{fig:teaser}) in a training-free manner.


In summary, our contributions can be summarized as: ($\romannumeral1$) We provide the first theoretical analysis of previous backward guidance methods to the best of our knowledge. Inspired by the theoretical insights, we present an advanced approach, WinWinLay, for Layout-to-Image generation which exhibits significantly control and realistic quality. ($\romannumeral2$) We propose a novel Non-local Attention Energy Function that guides the model to better adhere to spatial constraints while preserving the natural structure of objects. ($\romannumeral3$) We explore a Langevin dynamics-based Adaptive Update scheme to eliminate the trade-off between layout instruction and realistic appearance while maintaining efficiency. ($\romannumeral4$) We conduct extensive experiments to highlight the effectiveness of WinWinLay for both controllability and quality, thereby advancing the practical application of L2I generation.


\section{Related Work}
\subsection{Text-to-Image Generation}
Diffusion models~\cite{den} have recently disrupted the longstanding dominance of generative adversarial networks (GANs)~\cite{gan} in image synthesis~\cite{diff,ddim,cascaded}, further accelerating advancements in Text-to-Image (T2I) generation~\cite{imgen,ldm,sdxl,dit,pixart}. Benefitting from training on large-scale image-text datasets~\cite{5b}, they exhibited remarkable ability to generate diverse, creative images controlled by text prompts. Moreover, recent developments also unlock the potential of T2I to tackle challenging vision tasks, such as image editing~\cite{pix2pix,imagic,dreambooth,null,inver}, style transfer~\cite{styledrop,artadapter,dreamstyler} and 3D generation~\cite{v3d,instant3d,prolificdreamer}. Despite substantial progress, existing works still struggle to precisely control image details, such as layout, which significantly hampers their applicability in real-world scenarios. In this paper, we focus on controlling subject synthesis within the complex scene through user-specified layout condition in a training-free manner.
\subsection{Layout-to-Image Generation}
Layout-to-Image (L2I)~\cite{freestyle,layoutdiff,boxdiff,ssmg,hico,csg} focus on generating images that simultaneously adhere to the textual prompt and corresponding layout instructions, \textit{e.g.}, bounding boxes and scribble. To achieve this, existing works~\cite{gligen,reco,instance,migc} propose to finetune the powerful Text-to-Image models with paired data. However, collecting such extensive labeled images is not non-trivial and high-cost overheads also limit the usage of L2I in practice. To overcome the aforementioned challenges, recent studies~\cite{multi,dense,high,zero,training,boundary,csg} explore training-free approaches that utilize forward or backward guidance mechanisms within the denoising process. Despite this success, the generated results often deviate from the predefined positions and exhibit severe unrealistic artifacts. Here, we theoretically analyze the backward guidance and propose improved strategies to alleviate these problems.

\begin{figure}
\begin{center}
\centerline{\includegraphics[width=1\linewidth]{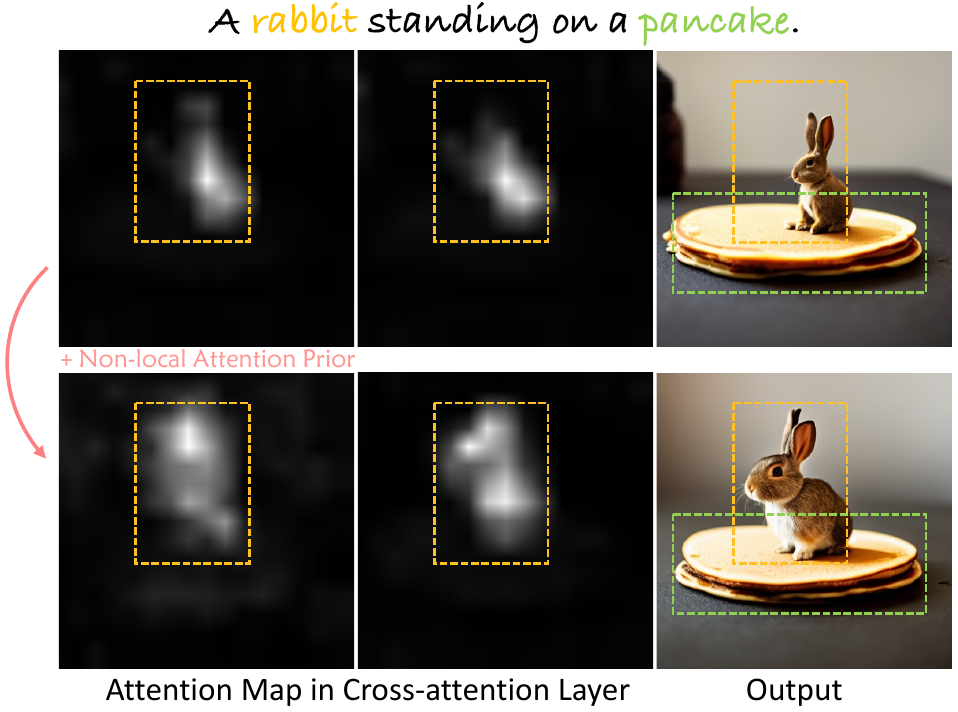}}
\caption{Visualization of cross-attention between text token ``rabbit'' and intermediate features of the denoiser. It can be observed that the attention energy function leads to attention focusing on a local region. Conversely, after incorporating non-local attention prior, the attention attempts to encompass the entire bounding box. To demonstrate the robustness of this prior, ``pancake'' is consistently equipped with non-local attention prior in this context.}
\label{fig:mo_1}
\end{center}
\vskip 0.2in
\vspace{-1.5cm}
\end{figure}

\section{Preliminaries}
\subsection{Latent Diffusion Model}
Our work recruits the Latent Diffusion Model (LDM)~\cite{ldm} as prior, which defines a generative process that gradually transforms a noise latent $\epsilon$ and a text prompt $p$ to an image $x$. Specifically, LDM first encodes the image $x_{0}$ into in a low-dimensional latent space using a pre-trained encoder $E$, \textit{i.e.}, $z_{0} = E(x_{0})$ and operates the diffusion process. Then, the representation $z_{0}$ are decoded back into the image $x_{0}$ using a pre-trained decoder $D$. Specifically, given a noised latent $z_{t}$ at step $t$ and text tokens $y=\phi(p)$ where $\phi$ is a frozen text encoder~\cite{clip}, the denoiser $\epsilon_{\theta}$ is optimized to remove the noise $\epsilon$ added to the latent code $z_{0}$:
\begin{equation}
\label{eq:train}
\mathop{\mathrm{min}}_{\theta}{\mathbb{E}}_{z_{0}, \epsilon,t}||\epsilon-\epsilon_{\theta}(z_{t},t, y)||^2_2.
\end{equation}
Here, learnable parameters $\theta$ is typically integrated into a U-Net~\cite{unet} architecture, which comprises convolutional layers, self-attention and cross-attention mechanisms.
\subsection{Backward Guidance for Layout-to-Image}
\label{sec:bg}
To precisely position the subject at the specified location, backward guidance~\cite{training} aims to sample desired images from the distribution $p(z|y,b,i)$, where the bounding box $b_{i}$ corresponding to the text token $y_i$ of target subject. Specifically, backward guidance begins by designing an optimizable object, \textit{e.g.}, attention energy function $\mathcal{E}_{aef}$, to redistribute attention map $a$ in feature $z$, encouraging the attention values of the $i^{th}$ token to focus within region $b_i$:
\begin{equation}
\label{eq:eng}
\mathcal{E}_{aef}(a^{(\gamma)},m,i)=\left(1-\frac{\sum_{u}m_{ui}\cdot a_{ui}^{(\gamma)}}{\sum_u a_{ui}^{(\gamma)}}\right)^2,
\end{equation}
where $a_{ui}^{(\gamma)}$ quantifies the strength of the association between each location $u$ in cross-attention layer $\gamma$ and token $y_{i}$, $m_{i}$ denotes the binary mask which is transformed from $b_{i}$ with pixels inside the box region marked as 1 and those outside as 0. To bias attention maps, the gradient of
the Equation~(\ref{eq:eng}) is computed via backpropagation to update the initial latent feature $\bar{z}=z$: 
\begin{equation}
\label{eq:engg}
z\leftarrow \bar{z}-\eta\nabla_{z}\sum_{\gamma}\sum_{i} \mathcal{E}_{aef}(a^{(\gamma)},m,i),
\end{equation}
where $\eta$ is a hyperparameter used to control the strength.  

\section{Method}
In this section, we introduce WinWinLay, a training-free Layout-to-Image generation framework. First, we provide a detailed introduction to Non-local Attention Energy Function (Section~\ref{sec:nap}), which is used to enhance layout constrain. Subsequently, we shift focus to explore Adaptive Update (Section~\ref{sec:au}) to eliminate the trade-off between control and quality. 
\subsection{Non-local Attention Energy Function}
\label{sec:nap}
Attention energy function~\cite{training} is a widely used loss term for guiding attention redistribution, but it often leads to objects occupying local region of the bounding box, hindering precise control. To address this, non-local attention prior is introduced to encourage attention to be smoothly distributed across the specified location.

\textbf{\textit{Revisiting of Attention Energy Function.}} Based on the overview of attention energy function (Section~\ref{sec:bg}), we can straightforwardly reformulate Equation~(\ref{eq:eng}) as following for intuitive analysis:
\begin{equation}
\label{eq:reng}
\mathcal{E}_{aef}(a,m)= (1 - \sum_{u}\tilde{a}_{u}\cdot m_{u})^{2},
\end{equation}
where $\tilde{a}_{u}=a_u/\sum_{u} a_u$ denotes the normalization of $a_{u}$ which denotes attention value of $u^{th}$ patch in attention map $a$. Notably, for simplicity and clarity of presentation, the notation for subjects and cross-attention layer is omitted. Given $\max m_{u}=1$, so we have $\max_{\tilde{a}}\sum_{u} \tilde{a}_{u}\cdot m_{u}=1$ and then obtain following equation:
\begin{equation}
\label{eq:rreng}
\mathcal{E}_{aef}(a,m)= (\max_{\tilde{a}}\sum_{u} \tilde{a}_{u}\cdot m_{u} - \sum_{u}\tilde{a}_{u}\cdot m_{u})^{2}.
\end{equation}
Here, it is evident that minimizing $\mathcal{E}_{aef}$ is equivalent to maximizing $\tilde{a}\cdot m$ . Specifically, when  $\tilde{a}\cdot m$ attains maximum value, the support set of $\tilde{a}$ is guaranteed to be contained within support set of $m$. Building on this insight, we first observe that the optimal solution to the attention energy function is not unique. Actually, when the support set of $\tilde{a}$ is entirely contained within $m$ , $\tilde{a}\cdot m$ can be maximized. However, this non-uniqueness may lead to the support of $\tilde{a}$ concentrating in localized regions of $m$, thereby compromising effective control over the spatial layout. 
Meanwhile, we notice that $\nabla_{\tilde{a}}\mathcal{E}_{aef}(a,m)\parallel m$, causing all patches within the masked region to receive identical gradient magnitudes. This gives patches with larger initial values a significant advantage during the optimization process, thereby amplifying localized effects (see Figure~\ref{fig:mo_1}). To support this perspective, we start by considering a simple yet universal optimization objective as follows:
\begin{equation}
\label{eq:sopt}
\max_{v} f(v) = m\cdot {\rm softmax}(v), where\ v \in \mathbb{R}^n,\ m\in \{0,1\}^{n},
\end{equation}
and we denote $q={\rm softmax}(v)$ for simplicity. 
\begin{theorem}
Assume that during the optimization process at a certain step, there exist $m_j=m_k=1$ and $q_j > q_k$. After a single gradient update with step size $\beta > 0$, the updated values $q^{\prime}_{j}$, $q^{\prime}_{k}$ satisfy $q^{\prime}_{j} > q^{\prime}_{k}$ and $\frac{q^{\prime}_{j}}{q^{\prime}_{k}}>\frac{q_{j}}{q_{k}}$.
\end{theorem}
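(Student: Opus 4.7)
The statement is about gradient ascent on $f(v)=m\cdot\operatorname{softmax}(v)$, so the natural first step is to write down the gradient in closed form. Using the standard softmax Jacobian $\partial q_i/\partial v_l = q_i(\delta_{il}-q_l)$ and denoting $S=\sum_i m_i q_i = f(v)$, I would compute
\[
\frac{\partial f}{\partial v_l}=\sum_i m_i q_i(\delta_{il}-q_l)=q_l(m_l-S).
\]
Hence the update rule reads $v'_l = v_l + \beta\,q_l(m_l-S)$ for every coordinate $l$.

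The second step exploits a structural feature of softmax that the theorem statement is really about: the ratio $q_j/q_k$ depends only on the difference $v_j-v_k$, since $q_j/q_k = e^{v_j-v_k}$. Specializing the update to the two coordinates $j,k$ where $m_j=m_k=1$, the coefficient $(m_l-S) = (1-S)$ is shared, so
\[
v'_j - v'_k = (v_j-v_k) + \beta(1-S)\,(q_j-q_k).
\]
Thus it suffices to show $\beta(1-S)(q_j-q_k)>0$, which is immediate from $\beta>0$, the hypothesis $q_j>q_k$, and the fact that $S<1$ whenever there is at least one coordinate with $m_i=0$ (since softmax outputs are strictly positive and sum to $1$).

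From the monotonicity of the exponential I would conclude $q'_j/q'_k = e^{v'_j-v'_k} > e^{v_j-v_k} = q_j/q_k$, which is the second desired inequality; the first inequality $q'_j > q'_k$ then follows because $q_j/q_k>1$ implies $q'_j/q'_k>1$. I would also briefly dispatch the degenerate case $S=1$ (equivalent to $m\equiv 1$ over the support of $q$), where the gradient vanishes, the update is trivial, and both inequalities reduce to the hypothesis.

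\textbf{Expected difficulty.} There is no substantial analytic obstacle, since everything collapses to a one-line calculation once the softmax-ratio identity $q_j/q_k=e^{v_j-v_k}$ is invoked; the main subtlety is simply noticing that $\nabla_v f$ assigns the same scalar factor $(1-S)$ to the two coordinates with $m_j=m_k=1$, so the asymmetry $q_j>q_k$ is preserved and amplified multiplicatively through the exponential, rather than additively. A minor care point is to record the mild non-degeneracy assumption $S<1$, without which the update is a no-op.
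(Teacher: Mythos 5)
Your proposal is correct and follows essentially the same route as the paper's proof: compute the softmax Jacobian to get $\nabla_v f = q_l(m_l - q^T m)$, observe that the coordinates with $m_j=m_k=1$ share the scalar factor $(1-q^Tm)$, and conclude via $q_j/q_k = e^{v_j-v_k}$ that the ratio is amplified. You are in fact slightly more careful than the paper, which silently assumes the positivity of $\beta' = \beta(1-q^Tm)$ that you make explicit via the non-degeneracy condition $S<1$.
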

\begin{proof}
First, Jacobian matrix of $q$ with respect to $v$ can be computed as follows:
\begin{equation}
\label{eq:jm}
J=diag(q)-qq^{T}.
\end{equation}

According to the chain rule, we have the gradient of $v$ by:
\begin{equation}
\label{eq:vg}
\nabla_vf(v)=J^Tm=\begin{pmatrix}q_1(m_1-q^Tm)\\\vdots\\q_n(m_n-q^Tm)\end{pmatrix}.
\end{equation}
Then, $v^j$ and $v^k$ are updated as:
\begin{equation}
\label{eq:apj}
v_j^{\prime}=v_j+\beta(m_j-q^Tm)q_j=v_j+\beta^{\prime}q_j,
\end{equation}
\begin{equation}
\label{eq:apk}
v_k^{\prime}=v_k+\beta(m_k-q^Tm)q_k=v_k+\beta^{\prime}q_k,
\end{equation}
Naturally, combining the above equations gives:
\begin{equation}
\label{eq:qg}
\begin{aligned}
        \frac{q_j^\prime}{q_k^\prime} &= \exp(v_j^\prime - v_k^\prime)  = 
        \exp(v_j - v_k + \beta^\prime(q_j - q_k)) \\
        &= \exp(v_j - v_k)\cdot \exp(\beta^\prime(q_j - q_k)) \\
        &= \frac{q_j}{q_k} \cdot \exp(\beta^\prime(q_j - q_k)) > \frac{q_j}{q_k}.
\end{aligned}
\vspace{-1cm}
\end{equation}

\end{proof}
Through the analysis of the above problem, it can be concluded that patches within the masked region with larger initial values tend to amplify their relative prominence during the optimization process, thereby suppressing the growth of other regions. This implies that the attention map redistributed by energy function exhibits an implicit bias, favoring regions with larger initial values. Consequently, it becomes challenging to evenly cover the entire box.

\textbf{\textit{Non-local Attention Prior.}}  In this regard, a simple and effective non-local attention prior is introduced to facilitate global attention responses. Different from intuitively conceivable uniform constrain, this prior promotes the placement of objects near the center of the bounding box while encouraging maximal coverage of the entire region. 
Concretely, given the bounding box $b$ (width as $W$, height as $H$) and its center point $c = (c_x, c_y)$, the normalized distance from point $u=(u_x, u_y)$ within the masked region $S$ to the center can be calculated as $d_{u} = \frac{(u_x-c_x)^{2}}{W} + \frac{(u_y-c_y)^{2}}{H}$. 
Accordingly, we construct the prior distribution $\tau_u \propto exp{(-\lambda d_u)}$, where $\lambda \geq 0$ is used to control the variance of the distribution. This design ensures that points farther from the center are assigned smaller probability values. 
Subsequently, local bias is alleviated by maximizing the $\rm KL$ divergence between the distribution of 
the attention within $S$ and the prior $\tau$:
\begin{equation}
\label{eq:le}
\mathcal{R}_{nap} = \sum_{u \in S}\hat{a}_u \log \frac{\hat{a}_u}{\tau_u},
\end{equation}
where $\hat{a}_u = a_u/\sum_{u \in S}a_u$ and $a$ denotes attention value.

\textbf{\textit{Total Loss.}} Here, non-local attention energy function is formulated as the summation of $\mathcal{E}_{aef}$ and $\mathcal{R}_{nap}$ across all subject $i$ and layer $\gamma$:
\begin{equation}
\begin{aligned}
\label{eq:loss}
\mathcal{E}_{naef} =\sum_{\gamma}\sum_{i}[(1- & \frac{\sum_{u}m_{ui}\cdot a_{ui}^{(\gamma)}}{\sum_u a_{ui}^{(\gamma)}})^2 \\
& + \rho\sum_{ui \in S_{i}}\hat{a}_{ui}^{(\gamma)} \log \frac{\hat{a}_{ui}^{(\gamma)}}{\tau_{ui}}].
\end{aligned}
\end{equation}
To account for the irregular shape of objects in real-world scenarios, we introduce a hyperparameter $\rho$ that decreases linearly with the denoising timesteps, enabling objects to adapt to naturally structure. Similar to existing work~\cite{training}, we only reallocate cross-attentions with corresponding tokens in the middle and first up layers.

\begin{figure}
\begin{center}
\centerline{\includegraphics[width=\linewidth]{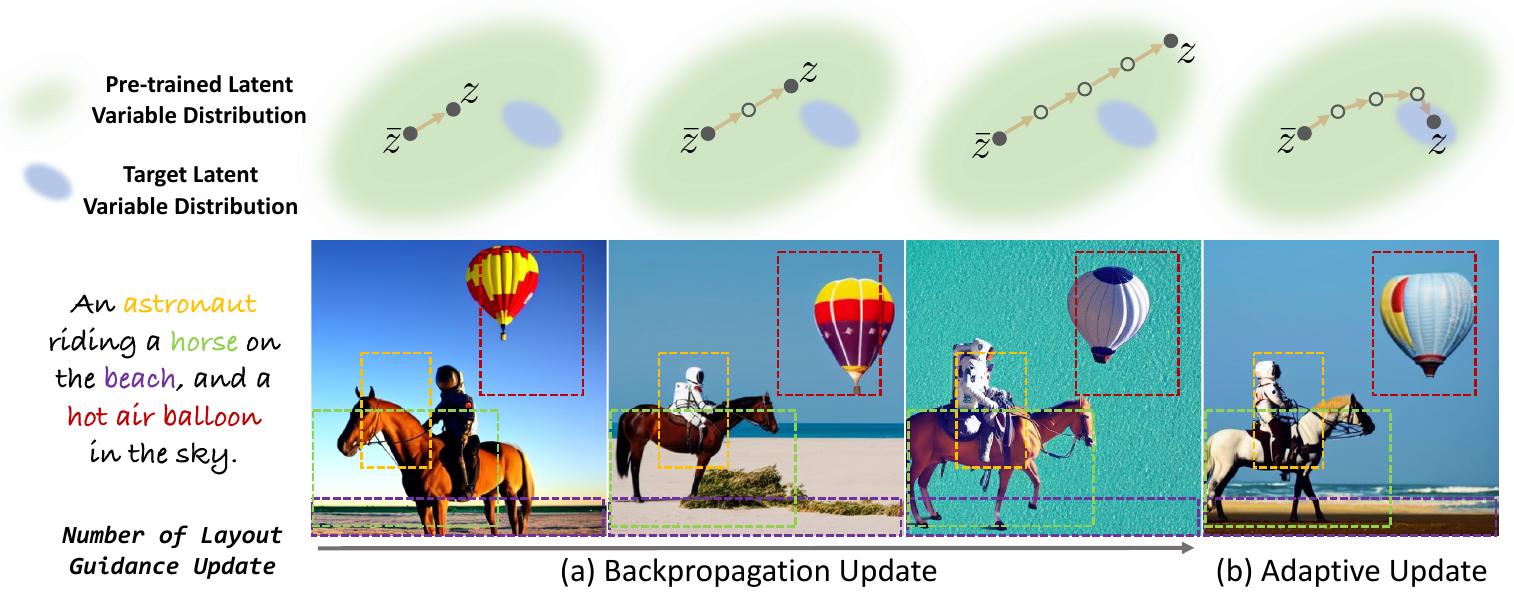}}
\caption{Given the initial feature $\bar{z}$ in a pre-trained distribution, our goal is to iteratively update it within layout constraints to achieve a target latent vector $z$. Here, (a) backpropagation update struggles to effectively maintain spatial control with a limited number of updates, while an increased number of updates often leads to deviations from the pre-trained distribution, leading to a trade-off between control and generation quality. In contrast, (b) adaptive update strategies simultaneously account for the pre-trained distribution and layout constraints, thereby consistently improving overall performance. Zoom in for more details.}
\label{fig:mo_2}
\end{center}
\vskip 0.2in
\vspace{-1.5cm}
\end{figure}

\subsection{Adaptive Update}
\label{sec:au}
Although backpropagation update is simple, it fails to balance layout constraints and image quality. Therefore, we propose Adaptive Update to consistently enhance the appearance of outputs based on Langevin dynamics and adaptive distribution construction.

\textbf{\textit{Revisiting of Backpropagation Update.}} Given latent feature $z_{t+1}$ as timestep $t+1$, conditional probability $p(z_{t}|z_{t+1})$ of pretrained diffusion model to obtain an initial estimation $\bar{z}_{t}$. 
Subsequently, the gradient update step refines $\bar{z}_{t}$ to minimize the non-local attention energy function $\mathcal{E}_{naef}$ (replace $\mathcal{E}_{aef}$ for consistent description) to obtained optimized latent variable $z_{t}=\arg\min \mathcal{E}_{naef}(\bar{z}_{t},m)$ via Equation~(\ref{eq:engg}) where $a = attention\_map(\bar{z})$.
However, update in this manner fails to adequately account for the constraints imposed by the latent variable distribution $p(z_t)$, which may lead to a trade-off between generation control and output quality. Specifically, when the gradient updates are insufficient, the optimization of $\mathcal{E}_{naef}$ remains suboptimal, resulting in poor layout control. Conversely, when the gradient updates are overly exhaustive, the resulting $z_{t}$ may deviate significantly from the initial estimate $\bar{z}_{t}$, leading to a reduced likelihood $p(z_{t})$. This misalignment adversely affects subsequent denoising steps, ultimately degrading the quality of the generated images. Moreover, the visualization results in Figure~\ref{fig:mo_2} under varying levels of control intensity further experimental substantiate this conclusion.

\textbf{\textit{Langevin Dynamics Updating.}} To eliminate the stubborn trade-off, we propose that both the attention redistribution  and $p(z_t)$ should be concurrently considered during the update process. For attention redistribution function, the corresponding Gibbs distribution can be constructed as $p(m|z_{t}) \propto exp(-\nu \mathcal{E}_{naef}(z_{t},m))$, where $\nu$ is a hyperparameter that controls the shape of the distribution. Given timestep $t$, our ultimate goal is to sample $z_{t}$ from $p(z_{t}|m)$. Based on Bayes' Theorem, we have:
\begin{equation}
\label{eq:bye}
p(z_t|m) \propto p(z_t)p(m| z_t),
\end{equation}
then the score function of $p(z_t|m)$ can be derived as:
\begin{equation}
\label{eq:score}
\begin{aligned}
    \nabla_{z_t}{\rm log}(p(z_t|m)) &= \nabla_{z_t}{\rm log}(p(z_t)) + \nabla_{z_t}{\rm log}(p(m|z_t))\\
    &=\nabla_{z_t}{\rm log}(p(z_t))-\nu \nabla_{z_t}{\mathcal{E}_{naef}}(z_t, m).
\end{aligned}
\end{equation}
Here, $\nabla_{z_t}{\rm log}(p(z_t))$ represents the unconditional score function which is approximated by pre-trained diffusion model. According to ~\cite{gmb}, we can use Langevin dynamics to sample from any distribution with a known score function. Specifically, given a step size $\xi>0$ and an initial value 
$\bar{z}_{t}^{(0)}$, Langevin dynamics iteratively updates as follows:
\begin{equation}
\label{eq:LD}
    \bar{z}_t^{(k+1)} = \bar{z}_t^{(k)} + \xi\nabla_{\bar{z}_t^{(k)}}{\rm log}(p(\bar{z}_t^{(k)}|m)) + \sqrt{2\xi}\epsilon_k,
\end{equation}
where $\epsilon_k \sim \mathcal{N}(0, I)$ and $0\leq k \leq O$. 
As $\xi \rightarrow 0$ and $O \rightarrow \infty$, the distribution of $\bar{z}_t^{O}$ will converge to $p(z_{t}|m)$. 
Note that for step size $\xi >0$ and finite $O$, the sampling process can be corrected using the Metropolis-Hastings method to convert it into a strict MCMC sampling procedure. However, this correction step is typically omitted for convenience in practice. Here, similar to ~\cite{sde}, we determine step size $\xi=2(r||\epsilon_k||_{2}/||\nabla_{\bar{z}_t}{\rm log}(p(\bar{z}_t|m))||_{2})^{2}$ where $r$ is the signal-to-noise ratio.

\textbf{\textit{Adaptive Distribution Construction.}} Although Langevin dynamics effectively mitigates the trade-off, the introduction of the additional hyperparameter $\nu > 0$ of distribution consequently reduces generation efficiency. From Equation~(\ref{eq:score}), it follows that $\nu$ adjusts the weight of $\nabla_{z_t}{\mathcal{E}_{naef}}(z_t, m)$ in score function $\nabla_{z_t}{\rm log}(p(z_t|m))$. Intuitively, a larger $\nu$ results in a steeper distribution, where the optimization process focuses more on minimizing the non-local attention energy function, leading to faster convergence (smaller $O$) but requiring larger step sizes $\xi$ to accelerate the optimization process of ${\rm log}(p(z_t))$,  which can increase the error of Langevin dynamics and then degrade image quality. Conversely, a smaller $\nu$ produces a flatter distribution, prioritizing image quality preservation, which slows the optimization process (larger $O$) and requires smaller $\xi$ but leads to more iterations, reducing sampling efficiency. Therefore, selecting the appropriate $\nu$ is critical to balance image quality and generation efficiency. Here, we propose treating Equation~(\ref{eq:LD}) as a multi-task optimization problem to explore the optimal $\nu$, where one task minimizes the attention energy and the other maximizes the distribution probability. Inspired by Nash-MTL~\cite{multitask}, we model the gradient combination of these two tasks as a bargaining game, solving for the Nash Bargaining Solution. Let $\{g_j \in \mathbb{R}^d| 1\leq j \leq K\}$ represent the gradients of $K$ tasks, the optimal gradient combination coefficients $\alpha \in \mathbb{R}^K_+$ satisfy $G^TG\alpha = \frac{1}{\alpha}$, where $G$ is the matrix whose columns are the gradients $g_j$. Nash-MTL uses optimization to approximate the solution for $\alpha$, and we find that when $K=2$, this equation has a simple analytical solution:
\begin{corollary}
Given $G=(g_1, g_2)\in \mathbb{R}^{d\times2}$ and $\alpha=(\alpha_1, \alpha_2)^T \in \mathbb{R}^2_+$, if $G^TG\alpha = \frac{1}{\alpha}$, then we have $\frac{\alpha_1}{\alpha_2} = \frac{||g_2||}{||g_1||}$.
\end{corollary}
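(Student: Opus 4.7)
The plan is to prove the corollary by direct algebraic manipulation of the fixed-point equation $G^T G \alpha = 1/\alpha$, where by $1/\alpha$ we mean the componentwise reciprocal $(1/\alpha_1, 1/\alpha_2)^T$. Since $G$ has only two columns, $G^T G$ is the $2 \times 2$ Gram matrix with diagonal entries $\|g_1\|^2, \|g_2\|^2$ and off-diagonal entry $g_1^T g_2$, so the system reduces to just two scalar equations, which should be enough to pin down the ratio $\alpha_1/\alpha_2$ without solving for $\alpha_1$ and $\alpha_2$ individually.

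Concretely, first I would write out the two coordinate equations obtained from $G^T G \alpha = 1/\alpha$:
\begin{equation*}
\|g_1\|^2 \alpha_1 + (g_1^T g_2)\alpha_2 = \tfrac{1}{\alpha_1}, \qquad (g_1^T g_2)\alpha_1 + \|g_2\|^2 \alpha_2 = \tfrac{1}{\alpha_2}.
\end{equation*}
The key observation is that multiplying the first equation by $\alpha_1$ and the second by $\alpha_2$ makes both right-hand sides equal to $1$, while the cross term $(g_1^T g_2)\alpha_1 \alpha_2$ appears on the left of each. Subtracting then annihilates this cross term and yields $\|g_1\|^2 \alpha_1^2 = \|g_2\|^2 \alpha_2^2$. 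Finally, since $\alpha \in \mathbb{R}_+^2$, taking positive square roots gives $\|g_1\|\alpha_1 = \|g_2\|\alpha_2$, which rearranges to the claimed identity $\alpha_1/\alpha_2 = \|g_2\|/\|g_1\|$.

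There is no real obstacle here: the argument is a three-line computation, and the only subtlety is being careful that $\alpha_1, \alpha_2 > 0$ so that the reciprocals $1/\alpha_i$ are well defined and so that square-rooting preserves the sign. One might also want to mention briefly, for completeness, that the result is symmetric in the sense that it does not determine $\alpha_1, \alpha_2$ uniquely — only their ratio — which is consistent with the scale-invariance properties of the Nash Bargaining formulation and explains why this closed form suffices for constructing the adaptive weight used in Equation~(\ref{eq:LD}).
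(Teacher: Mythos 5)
Your proof is correct and follows essentially the same route as the paper's: expand $G^TG\alpha = 1/\alpha$ into two scalar equations, multiply through by $\alpha_1$ and $\alpha_2$ respectively so both right-hand sides become $1$, subtract to cancel the cross term $\alpha_1\alpha_2\, g_1^Tg_2$, and conclude $\alpha_1^2\|g_1\|^2=\alpha_2^2\|g_2\|^2$. Your explicit remark that $\alpha\in\mathbb{R}^2_+$ is needed to take positive square roots is a small but welcome addition of rigor that the paper's version leaves implicit.
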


\begin{proof}
    According to $G^TG\alpha = \frac{1}{\alpha}$, we have:
    \begin{equation}
    \label{eq:al1}
        \alpha_1^2||g_1||^2 + \alpha_1\alpha_2g_1^Tg_2 = 1,
    \end{equation}
    \begin{equation}
    \label{eq:al2}
        \alpha_1\alpha_2g_2^Tg_1 + \alpha_2^2||g_2||^2 = 1.
    \end{equation}
    By subtracting Equation~(\ref{eq:al2}) from Equation~(\ref{eq:al1}):
    \begin{equation}
    \label{eq:eq}
        \alpha_1^2||g_1||^2 = \alpha_2^2||g_2||^2,
    \end{equation}
    we can derive the conclusion as $\frac{\alpha_1}{\alpha_2} = \frac{||g_2||}{||g_1||}$.
\end{proof}
Based on the above proof, we propose apative update rule by formalize $\nu$ as an adaptive parameter for each iteration: 
\begin{equation}
    \label{eq:nuopt}
        \nu = \frac{||\nabla_{z_t}{\rm log}(p(z_t))||}{||\nabla_{z_t}{\mathcal{E}_{naef}}(z_t, m))||}.
\end{equation}
This design enables us to effectively mitigate the trade-off at negligible cost, making it more suitable for practical applications.

\begin{figure*}
\begin{center}
\centerline{\includegraphics[width=1\linewidth]{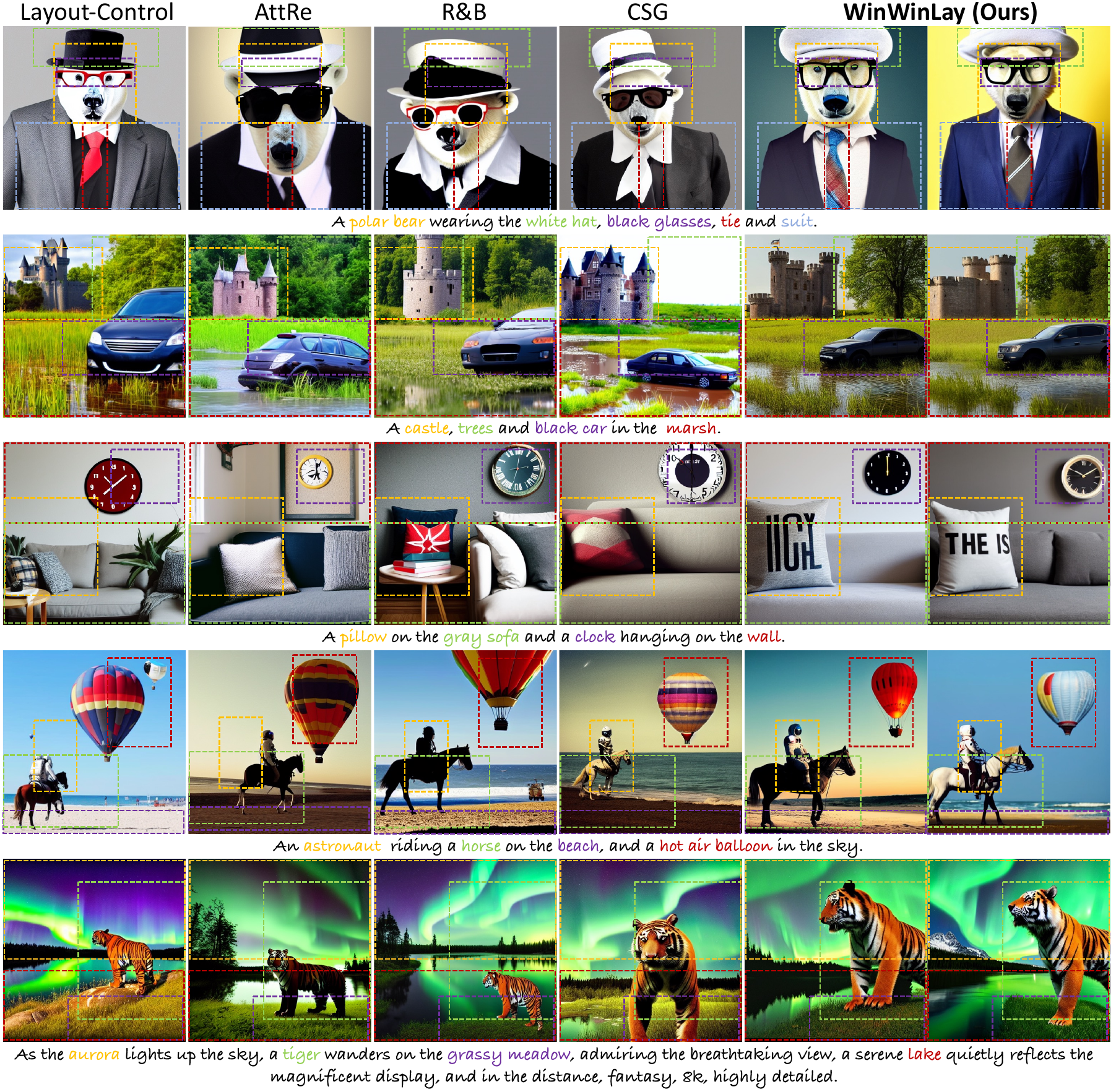}}
\caption{Qualitative comparison of our WinWinLay and state-of-the-art methods. Zoom in for more details.}
\label{fig:exe_main}
\end{center}
\vskip 0.2in
\vspace{-2cm}
\end{figure*}

\begin{table*}
\begin{center}
\resizebox*{1\linewidth}{!}{
\begin{tabular}{ccccccccccccc}

\toprule
\multirow{3}{*}{Model} & \multicolumn{5}{c}{COCO2014} & \multicolumn{5}{c}{Flicker30K}& \multicolumn{2}{c}{User Study}\\\cmidrule(r){2-6} \cmidrule(r){7-11}\cmidrule(r){12-13}
& AP$\uparrow$ & CLIP-s$\uparrow$ & FID$\downarrow$ & 
PickScore$\uparrow$& ImageReward$\uparrow$ &AP$\uparrow$& CLIP-s$\uparrow$& FID$\downarrow$& 
PickScore$\uparrow$& ImageReward$\uparrow$&Controllability$\uparrow$ &Quality$\uparrow$ \\
\cmidrule(r){1-13}
Layout-Control~\cite{training} & 8.42&	0.310&	29.79&	21.09&	0.7038&14.19&	0.288&	29.83&	20.39&	0.7016& 12.1 & 7.5\\
AttRe~\cite{phung2024grounded} & 15.51&	0.296&	27.51&	21.23&	0.7109&15.26&	0.277&	27.72&	20.64&	0.7095&18.7 & 22.3\\
R\&B~\cite{xiaor} &17.63	&0.306&	28.22&	21.16	&0.7071& 14.80	&0.291&	28.18	&20.58&	0.7114&20.6 & 19.4\\
CSG~\cite{csg} &17.58&	0.299&	27.64&	21.22&	0.7027&15.11&	0.282&	27.90&	20.51&	0.7049&20.9 & 21.0\\\cmidrule(r){1-13}
\rowcolor{yellow!15}\textbf{Ours} & \textbf{19.74}&	\textbf{0.327}&	\textbf{26.85}&	\textbf{21.41}&	\textbf{0.7218}&\textbf{17.28}	&\textbf{0.309}&\textbf{27.04}&	\textbf{20.85}&	\textbf{0.7202}&\textbf{27.7}&\textbf{29.8}\\
\bottomrule
\end{tabular}}
\caption{Quantitative comparison of our WinWinLay and state-of-the-art methods.}
\label{tab:quan}
\end{center}
\vspace{-1.5cm}
\end{table*}

\section{Experiments}
In this section, we first provide the experimental setup and then conduct both qualitative and quantitative experiments to compare our method with previous SOTA Layout-to-Image methods. Additionally, we perform an ablation study to demonstrate the effectiveness of the proposed approaches. 

\subsection{Experimental setup}
\textbf{\textit{Evaluation Benchmarks.}}
Akin to prior work~\cite{training}, we quantitatively evaluate our WinWinLay on COCO2014~\cite{coco} and Flickr30K~\cite{flickr30k}. For performance evaluation, we leverage YOLOv7~\cite{yolov7} for object detection, employing metrics such as AP~\cite{ap} to assess the effectiveness of our method in accurately locating and generating objects. Furthermore, the CLIP-s~\cite{clip} is utilized to quantitatively evaluate image-text compatibility, thereby measuring the semantic accuracy of the synthesized images. Additionally, we also use the advantage metric FID~\cite{kynkaanniemi2023role}, PickScore~\cite{kirstain2023pick} and ImageReward~\cite{xu2023imagereward} to evaluate image quality. Here, we set text template as ``A photo of [prompt]'' to acquire more realistic results. 
\begin{figure}
\begin{center}
\centerline{\includegraphics[width=\linewidth]{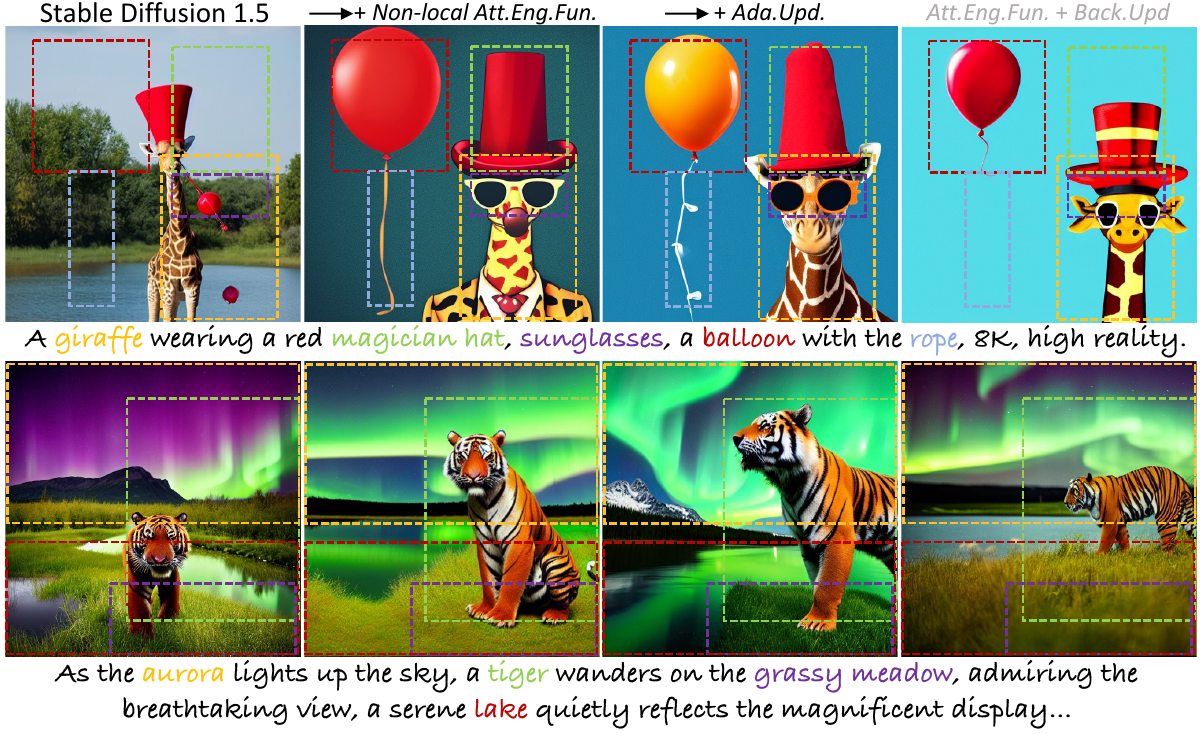}}
\caption{Qualitative ablation on proposed  Non-local Attention Energy Function and  Adaptive Update. Zoom in for more details.}
\label{fig:aba_1}
\end{center}
\vskip 0.2in
\vspace{-1.3cm}
\end{figure}

\begin{table}[ht]
\begin{center}
\resizebox*{1\linewidth}{!}{
\begin{tabular}{cccc}
\toprule
\multirow{3}{*}{Model} & \multicolumn{3}{c}{COCO2014} \\\cmidrule(r){2-4} 
& AP$_{50}$$\uparrow$ & AP$\uparrow$ & CLIP-s$\uparrow$  \\
\cmidrule(r){1-4}
\textcolor{lightgray}{Att.Eng.Fun.} + \textcolor{lightgray}{Back.Upd.} & 25.8 &8.4&0.310\\\cmidrule(r){1-4}
\textbf{Non-local Att.Eng.Fun.} + \textcolor{lightgray}{Back.Upd.} & 44.1 &17.4&0.318\\\cmidrule(r){1-4}
\textcolor{lightgray}{Att.Eng.Fun.} + \textbf{Ada.Upd.} & 36.7 &14.9&0.324\\\cmidrule(r){1-4}
\rowcolor{yellow!15}\textbf{Non-local Att.Eng.Fun.} + \textbf{Ada.Upd.} & \textbf{49.2} & \textbf{19.7}&\textbf{0.327}\\
\bottomrule
\end{tabular} 
}
\caption{Quantitative ablation on proposed Non-local Attention Energy Function and Adaptive Update on COCO2014.}
\label{tab:aba_1}
\vskip 0.2in
\vspace{-1.3cm}
\end{center}
\end{table}

\begin{figure}[ht]
\begin{center}
\centerline{\includegraphics[width=1\linewidth]{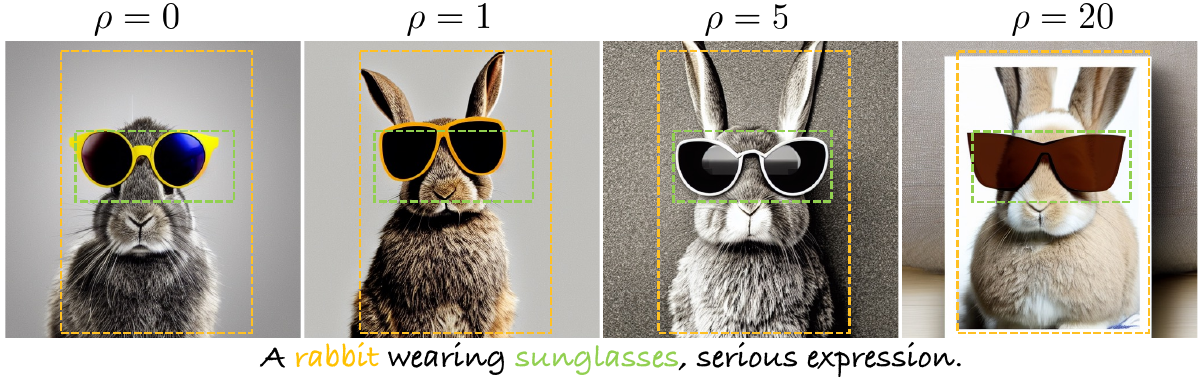}}
\caption{Ablation on hyperparameter of non-local attention prior.}
\label{fig:aba_2}
\vskip 0.2in
\vspace{-1.5cm}
\end{center}
\end{figure}

\textbf{\textit{Implementation Details.}} We adopt the Stable Diffusion 1.5~\cite{ldm}, pre-trained on the LAION-5B~\cite{LAION-5B}, as our base Text-to-Image model. During generation, we employ the DDIM sampler with 50 steps and set the scale guidance to 7.5 for generation. Since layout construction typically occurs during the early stages of denoising, we apply the layout constraint only within the initial 10 steps. The hyperparameters $\rho$ of non-local attention prior is set to 5/0 for max/min, respectively. For adaptive update, we set steps $O$ of Langevin dynamics is set as $4$ and signal-to-noise ratio $r$ as 0.06. We observe that these parameters generally work well in most cases, proving the generalizability of WinWinLay. We also point that better results may be obtained with a customized setting, \textit{e.g.},  a larger $\rho$ or more iterations for Langevin dynamics.

\subsection{Comparison With SOTA Methods}
We compare our approach with four representative state-of-the-arts of Layout-Control~\cite{training}, AttRe~\cite{phung2024grounded}, R\&B~\cite{xiaor} and CSG~\cite{csg}  to show the advantage of WinWinLay. All methods are implemented by official codes.

\textbf{\textit{Quantitative Comparison.}}
As presented in Table~\ref{tab:quan}, we first quantitatively evaluate generated images for our test dataset. Compared to methods Layout-Control and AttRe, CSG shows a significant improvement in object placement accuracy. However, we observed in our experiments that it is highly sensitive to gradient strength, where higher accuracy often leads to a severe decline in image quality, especially when generating a large number of objects. In contrast, our method consistently outperforms across multiple datasets and evaluation metrics, demonstrating a more robust improvement. Additionally, we resort to user studies to evaluate which method generates results that are most favored by humans. We conducted two user studies on the results in terms of the Controllability and Quality. In the first study, participants are asked to select the images that best align with the given layout. In the second study, participants are tasked with identifying the images that exhibit the most realistic appearance. To ensure clarity and reproducibility, we conducted the user study on Wenjuanxing, a platform similar to Amazon Mechanical Turk. 150 participants evaluated 50 image pairs, yielding 7500 responses per study. Images were shown side-by-side with layout prompts, and both question order and image positions were randomized to avoid bias. As shown in Table~\ref{tab:quan}, over 27.7\% of our results are selected as the best in both two metrics, which proves a significant advantage in generation.

\textbf{\textit{Qualitative Comparison.}}
To present a more detailed and visual comparison of our model, we carry out experiments on a smaller hand-crafted dataset with 3-5 objects. For fair comparison, we generate 10 images for each method under the same random seed and subsequently select the optimal image based on the AP$_{50}$ for display. To demonstrate the effectiveness of WinWinLay, 2 images are presented for each case. From the results in Figure~\ref{fig:exe_main}, we can draw the following conclusions: ($\romannumeral1$) Our method effectively places the target object within the given region while filling the entire bounding box without compromising the natural structure of the object, representing a significant improvement over existing methods. In contrast, other approaches often fail to generate images that faithfully adhere to the layout (\textit{e.g.}, 1$^{th}$ row), and parts of the object may collapse into localized regions of the bounding box (\textit{e.g.}, 4$^{th}$ row); ($\romannumeral2$) Our method successfully eliminates the trade-off between control and quality, without compromising the generative capability of the underlying model despite the additional layout constraints. However, existing works focus on layout adherence while neglecting the realism of the generated objects (\textit{e.g.}, 3$^{th}$ row). Furthermore, multiple distinct results generated under the same prompt and spatial constraints demonstrate the robustness of WinWinLay, thereby further advancing the progress of Layout-to-Image in practical applications.

\subsection{Ablation Study}
\textbf{\textit{Effect of Proposed Strategies.}} To further demonstrate the efficacy of the proposed method, we incrementally introduce Non-local Attention Energy Function and Adaptive Update to the baseline model and observe the resulting changes. As shown in Figure~\ref{fig:aba_1}, Non-local Attention Energy Function significantly enhances the control over the layout, while also ensuring an accurate representation of all target objects. On the other hand, Adaptive Update not only strengthens the spatial placement accuracy but also improves the overall image quality (\textit{e.g.}, ``giraffe'' appears more realistic). Furthermore, the quantitative results provided in Table~\ref{tab:aba_1} align with the visual observations, with Non-local Attention Energy Function achieving a substantial increase in AP and AP$_{50}$, while Adaptive Update further refines the spatial positioning and enhances image quality.
\begin{figure}
\begin{center}
\centerline{\includegraphics[width=1\linewidth]{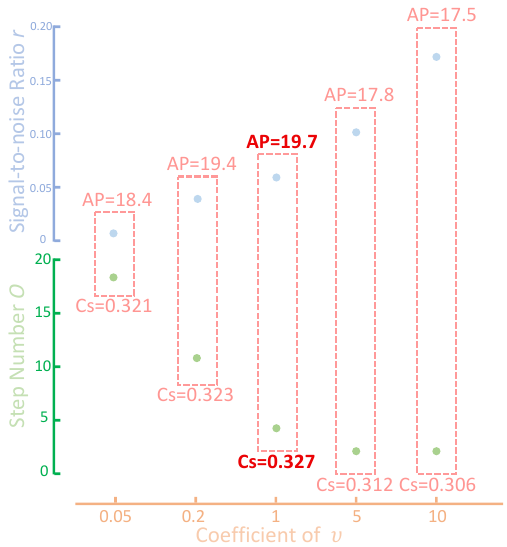}}
\caption{Ablation on hyperparameter of Adaptive Update on COCO2014. Cs denotes CLIP-s metric.}
\label{fig:aba_3}
\end{center}
\vskip 0.2in
\vspace{-1.8cm}
\end{figure}

\textbf{\textit{Hyperparameter of Non-local Attention Prior.}} Previous Attention Energy Function often suffers from the problem of attention collapse to local regions. Hence, non-local attention prior is introduced to constrain the attention to focus on the center of the bounding box and gradually expand to cover the entire box. Here, $\rho$ serves as the hyperparameter that controls the strength of the non-local attention prior. As shown in Figure~\ref{fig:aba_2}, with the gradual increase of $\rho$, the objects in the image progressively align with the edges of the bounding box, allowing for more precise layout control. However, when $\rho$ becomes too large, it may lead to unnatural object placements within the bounding box, such as a ``rabbit'' appearing on a square canvas. In our experiments, we find that setting $\rho$ to 5 generally yields optimal results, which is used across all experiments.

\textbf{\textit{Hyperparameter of of Adaptive Update.}} Adaptive parameter $\nu$ plays a critical role in the effectiveness of the proposed Adapative Update and impact the overall performance of WinWinLay. In Section~\ref{sec:au}, we analyzed the impact of different $\nu$ on efficiency and performance, proposing an adaptive strategy to significantly reduce the complexity of hyperparameter tuning. To validate its effectiveness, we introduce coefficients of varying magnitudes to the adaptive parameter and conduct grid searches to determine the signal-to-noise ratio $r$ of optimal step size and the number of update steps $O$ for each coefficient. As shown in Figure~\ref{fig:aba_3}, larger $\nu$  generally require larger step sizes and fewer update steps, which substantially degrade both accuracy and quality. Conversely, smaller $\nu$  has less pronounced effects on performance but significantly increase generation time.

\section{Conclusion}

The paper introduces WinWinLay, a novel training-free framework for Layout-to-Image generation, which achieves significant improvements in layout precision and visual fidelity. Addressing limitations in existing methods, WinWinLay incorporates two innovative components: Non-local Attention Energy Function, which ensures uniform attention distribution within specified layouts while preserving natural object structures, and Adaptive Update, which leverages Langevin dynamics to effectively balance layout control and image quality. Extensive experiments on standard benchmarks demonstrate that WinWinLay outperforms state-of-the-art approaches in both controllability and photorealism, making it a robust and efficient solution for L2I tasks. 
\section*{Impact Statement}
This project provides a training-free method for layout-controlled image synthesis, enhancing controllability while preserving generative strength of base models; however, like other generative techniques, it may be misused for disinformation, highlighting the need for future research to address ethical risks associated with layout-guided generation.
\vspace{-0.1cm}
\section*{Acknowledgements}
This paper is supported by National Natural Science Foundation of China under Grants (U23B2012, 12471308), Beijing Natural Science Foundation (1254050), Fundamental Research Funds for the Central Universities, and National Research Foundation, Singapore, under its Medium Sized Center for Advanced Robotics Technology Innovation.
\bibliography{example_paper}
\bibliographystyle{icml2025}



\end{document}